\declaretheorem[name=Theorem]{theorem}
\declaretheorem[name=Definition,style=definition,numberlike=theorem]{definition}
\numberwithin{equation}{section}
\title{Model-Informed Flows for Bayesian Inference}
\author{%
  Joohwan Ko,\; Justin Domke \\
  Manning College of Information and Computer Sciences\\
  University of Massachusetts Amherst\\
  \texttt{\{joohwanko,domke\}@cs.umass.edu} \\
}
\begin{document}

\maketitle

\begin{abstract}
Variational inference often struggles with the posterior geometry exhibited by complex hierarchical Bayesian models. Recent advances in flow‐based variational families and Variationally Inferred Parameters (VIP) each address aspects of this challenge, but their formal relationship is unexplored. Here, we prove that the combination of VIP and a full-rank Gaussian can be represented exactly as a forward autoregressive flow augmented with a translation term and input from the model’s prior. Guided by this theoretical insight, we introduce the Model‐Informed Flow (MIF) architecture, which adds the necessary translation mechanism, prior information, and hierarchical ordering. Empirically, MIF delivers tighter posterior approximations and matches or exceeds state‐of‐the‐art performance across a suite of hierarchical and non‐hierarchical benchmarks. 
\end{abstract}

\section{Introduction}
Inference remains challenging for many complex Bayesian models. Variational Inference (VI) casts posterior approximation as minimization of the Kullback-Leibler divergence between a tractable approximating distribution and the true posterior \citep{blei2017variational, jordan1999introduction, hoffman2013stochastic}. Recent advances have focused on increasing the expressiveness of variational families. Flow-based models, which apply a sequence of invertible transformations to a simple base distribution, have shown particular promise \citep{papamakarios2017masked, rezende2015variational, kingma2016improved}.

Variationally Inferred Parameters (VIP) \citep{gorinova2020automatic} address a key challenge in VI: The posteriors of hierarchical Bayesian models often exhibit pronounced curvature or ``funnel-like'' geometry \citep{betancourt2015hamiltonian, ingraham2017variational} that standard families like Gaussians cannot capture. VIP builds on insights from non-centered parameterization (NCP) in Markov chain Monte Carlo (MCMC) \citep{papaspiliopoulos2007general}, adaptively learning the optimal degree of non-centering for each latent variable during the VI optimization process.

Our preliminary investigations reveal a striking observation: When VIP is combined with full-rank Gaussian variational families, it achieves performance comparable to state-of-the-art methods from recent evaluation \citep{blessing2024beyond} (See Tables~\ref{tab:prelim_obs} and ~\ref{tab:benchmark}). Since VIP and flow-based models both excel at navigating posterior geometry, might they be manifestations of a common principle? In particular, can a flow-based distribution encapsulate the benefits of VIP?

Our core theoretical result, formalized in Theorem~\ref{thm:faf_arbitrary_vip}, shows that every full-rank VIP transformation can be represented exactly as a forward autoregressive flow (FAF), provided that the flow (i) respects the topological order of the latent variables, (ii) augments the usual affine mapping with a new "translation" term, and (iii) receives the model's prior mean and scale functions as additional inputs.
%When those prior functions are affine, the construction simplifies: Corollary~\ref{thm:vip_design_affine} shows that the an autoregressive flow can capture VIP without access to the target model's mean and scale functions.

%Taken together, these theoretical insights translate into practical design guidance: explicitly injecting prior information helps; the translation term is indispensable; the hierarchical ordering of variables must be preserved; and forward—not inverse—autoregressive flows are the natural vehicle for VIP.

Guided by these principles, we introduce the Model-Informed Flow (MIF), a forward autoregressive architecture that includes the necessary translation term and incorporates the distributional form of the model's prior. Across a range of benchmark models, MIF compares well to state-of-the-art VI methods. We also perform targeted ablations to verify the predicted implications and practical consequences of our theory.

\begin{figure}[t]
    \centering
    \includegraphics[width=1.0\linewidth]{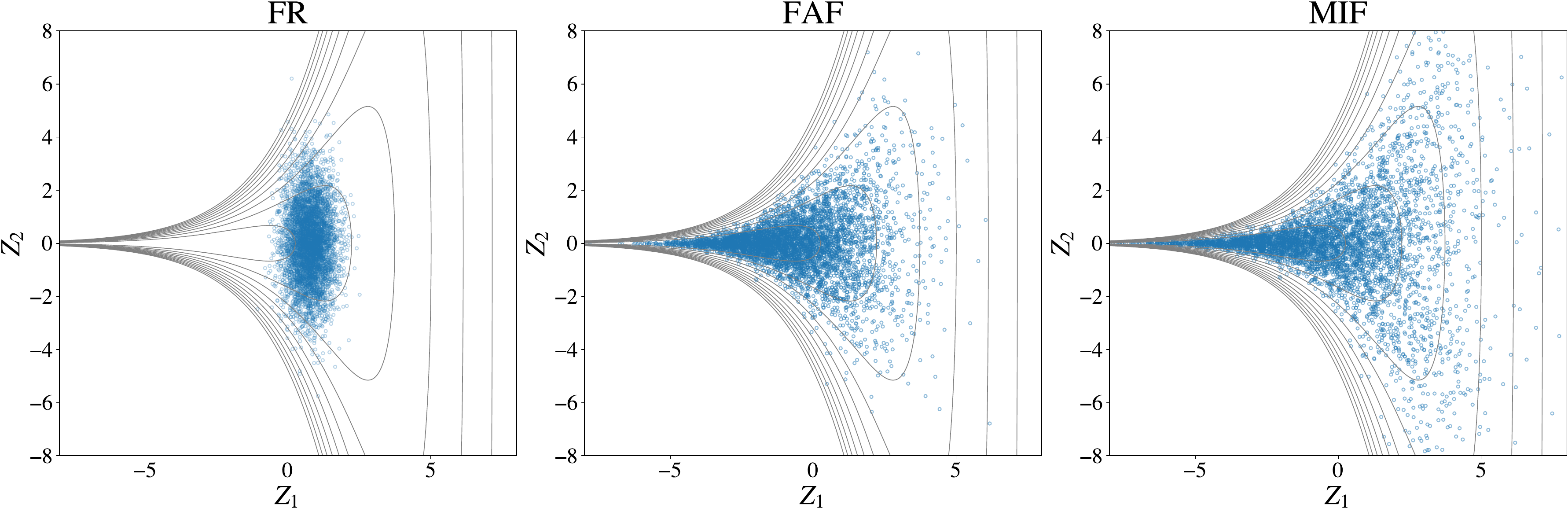}
    \caption{``Funnel''-type distributions commonly arise in hierarchical model. This figure shows the funnel distribution (gray contours) approximated with 5000 samples from three families (blue points): A Full-Rank Gaussian (FR) is very poor (KL-divergence of 1.86 nats). A standard Forward Autoregressive Flow (FAF) is much better (0.38 nats) but still imperfect. Our proposed model-informed flow (MIF) achieves a KL-divergence of effectively 0.}
    \label{fig:funnel}
\end{figure}

\section{Preliminaries}

\subsection{Variational Inference}
In Bayesian inference, given a model $p(z, x)$, the goal is to approximate the posterior distribution $p(z \vert x)$ given observed data $x$. Direct computation of this posterior is typically intractable, so variational inference (VI) introduces a tractable family $q_w(z)$ and minimises the Kullback--Leibler (KL) divergence from $q_w(z)$ to the true posterior. The marginal likelihood then decomposes as
\begin{equation}
\label{eq:elbo_decomp}
\log p(x) =
  \underbrace{\mathbb{E}_{q_w(z)}\!\left[\log \tfrac{p(z,x)}{q_w(z)}\right]}_{\operatorname{ELBO}\bigl[q_w(z) \,\|\, p(z,x)\bigr]}
  + \underbrace{\operatorname{KL}\bigl[q_w(z) \,\|\, p(z\vert x)\bigr]}_{\text{divergence}}.
\end{equation}
Since $\log p(x)$ is constant with respect to the variational parameters $w$, maximising the ELBO in~\eqref{eq:elbo_decomp} is equivalent to minimising the KL divergence between $q_w$ and the posterior.

\subsection{Hierarchical Bayesian Models}

Hierarchical Bayesian models are common in domains such as ecology \citep{cressie2009accounting,wikle2003hierarchical,royle2008hierarchical} and epidemiology \citep{dominici2000combining,richardson2003bayesian,lawson2018bayesian}. Following Gorinova et al. \cite{gorinova2020automatic}, we consider a hierarchical model in which each latent variable \(z_i\) is conditionally Gaussian with a mean and variance that are arbitrary function of parent variables, while the observed data \(x_j\) is generated from an arbitrary likelihood \(p(x_j \vert \pi(x_j))\). In particular, the likelihood for \(x_j\) may depend on one or more latent variables through its parent set \(\pi(x_j)\). Formally, write
\begin{equation}
\label{eq:hierarchical_bayesian_model}
z_i \sim
\mathcal{N}\Bigl(f_i\bigl(\pi(z_i)\bigr),g_i\bigl(\pi(z_i)\bigr)\Bigr),
\quad
x_j \sim p\bigl(x_j \vert \pi(x_j)\bigr),
\end{equation}
where \(\pi(\cdot)\) denotes the set of parent variables. Throughout, \(\pi(\cdot)\) may include both observed and latent variables; that is, a variable can be conditioned on any subset of the remaining variables permitted by the model graph.  In practice, the functions \(f_i\) and \(g_i\) (which return the mean and standard deviation of \(z_i\), respectively) often arise directly from a probabilistic programming system, encoding how the mean and variance of each latent variable depend on its parent variables.

The joint density  corresponding to Eq \ref{eq:hierarchical_bayesian_model} is
\begin{align}
\label{eq:model_joint}
p(z, x)
&=
\prod_{i=1} \mathcal{N}\Bigl(z_i \vert f_i\bigl(\pi(z_i)\bigr), g_i\bigl(\pi(z_i)\bigr)\Bigr) \times
\prod_{j=1} p\bigl(x_j \vert \pi(x_j)\bigr).
\end{align}
\paragraph{Example.}
As a concrete instance of ~\Cref{eq:hierarchical_bayesian_model}, consider the model
\[
\begin{aligned}
z_1 &\sim \mathcal N(0,1),
&\qquad
z_2 &\sim \mathcal N\!\bigl(\alpha z_1,\, \exp(\beta z_1)\bigr),\\
x_1 \mid z_1,z_2 &\sim \mathrm{Poisson}\!\left(\exp\!\bigl(z_1+\tfrac{1}{2}z_2\bigr)\right),
&\qquad
x_2 \mid z_2,x_1 &\sim \mathrm{Binomial}\!\left(n=x_1,\, p=\sigma(z_2)\right),
\end{aligned}
\]
where \(\sigma(u)=\frac{1}{1+\exp(-u)}\). To cast this as an instance of \Cref{eq:hierarchical_bayesian_model}, define the parent sets
\[
\pi(z_1)=\emptyset,\quad \pi(z_2)=\{z_1\},\quad
\pi(x_1)=\{z_1,z_2\},\quad \pi(x_2)=\{z_2,x_1\},
\]

and the functions

\[
f_1(\emptyset)=0,\quad g_1(\emptyset)=1,\qquad
f_2(z_1)=\alpha z_1,\quad g_2(z_1)=\exp(\beta z_1).
\]

In \Cref{eq:hierarchical_bayesian_model}, we make no assumptions about the distributional form of the observed variables $x_i$, so $p(x_1 \vert \pi(x_1)) = \mathrm{Poisson}\!\left(\exp\!\bigl(z_1+\tfrac{1}{2}z_2\bigr)\right)$ and $p(x_2 \vert \pi(x_2)) = \mathrm{Binomial}\!\left(n=x_1,\, p=\sigma(z_2)\right)$ can be arbitrary.

%With this instantiation, the joint density factorizes as in \cref{eq:model_joint}.

\subsection{Non-Centered Parameterization}
\label{subsec:ncp}

Hierarchical models often exhibit difficult curvature and ``funnel-like`` geometries. These are a challenge for both sampling and variational inference approaches \citep{betancourt2015hamiltonian,ingraham2017variational,hoffman2019neutra}. A common strategy in MCMC to alleviate these issues is the \emph{non-centered parameterization} (NCP) \citep{papaspiliopoulos2007general}.
%NCP is especially beneficial when data are scarce or relatively uninformative, as it helps in mitigating the strong dependencies that arise in such scenarios.
 To illustrate, consider two latent variables \(z_1\) and \(z_2\) having prior of
\begin{align}
z_1 \sim \mathcal{N}\bigl(\mu,\sigma\bigr), \quad
z_2 \sim \mathcal{N}\bigl(0,\exp(z_1/2)\bigr).
\end{align}

In the standard (centered) parameterization, the goal is to approximate $p(z \vert x)$. However, the nonlinear dependence of \(z_2\) on \(z_1\) can yield a sharply funnel-shaped joint posterior. As illustrated in \Cref{fig:funnel} (left panel), a full-rank Gaussian struggles to capture this funnel structure.

The non-centered approach introduces auxiliary variables \(\epsilon = (\epsilon_1, \epsilon_2) \sim \mathcal{N}(0,I)\) and defines a transformation \(z=T_{\mathrm{NCP}}(\epsilon)\) that recovers \((z_1, z_2)\) via
\begin{align}
z_1 = \mu + \sigma \epsilon_1, \quad z_2 = \exp\Bigl(\frac{z_1}{2}\Bigr)\epsilon_2.
\end{align}

Instead of $p(z,x)$, inference is done on \(p(\epsilon, x)\) where \(p(\epsilon) = \mathcal{N}(0,I)\) and \(p(x|\epsilon) = p(x|z=T_{\mathrm{NCP}}(\epsilon))\). The intuition is that if $x$ is relatively uninformative, then $p(\epsilon \vert x)$ will be close to a standard Gaussian, even when $p(z \vert x)$ might have a funnel-type geometry.

% instead of \(p(z,x)\). When the observation, \(x\sim p(x)\), depends on \(z_1\) and \(z_2\), the joint model becomes \(p(\epsilon, x)\) where \(p(\epsilon) = \mathcal{N}(0,I)\) and \(p(x|\epsilon) = p(x|z=T(\epsilon))\). This reparameterization decouples the latent variables from the original scale parameters, thereby reducing strong dependencies and effectively mitigating funnel-like posterior geometries, as seen in Figure~\ref{fig:funnel} (middle panel) where VIP leverages NCP to capture the funnel structure well. 

\subsection{Variationally Inferred Parameters}
\label{subsec:partial_ncp}

While NCP helps alleviate pathologies in hierarchical models, it is not ideal for every latent variable. Non-centering can prevent funnel-shaped geometries in cases with limited data, but when data is highly informative, a centered formulation may be better. \emph{Variationally Inferred Parameters} (VIP) addresses this trade-off by learning a partial non-centered parameterization for each variable \cite{gorinova2020automatic}.

VIP reparameterizes $z\in\mathbb{R}^D$ by introducing an auxiliary variable $\tilde{z}=\left(\tilde{z}_1, \ldots, \tilde{z}_D\right)$, and by defining a parameter $\lambda_i\in [0,1]$ per dimension. Rather than drawing $z_i$ directly from its prior, VIP samples
\begin{equation}
\tilde{z}_i \sim \mathcal{N}\left(\lambda_i f_i\left(\pi\left(z_i\right)\right), g_i\left(\pi\left(z_i\right)\right)^{\lambda_i}\right),
\end{equation}
then recovers \(z_i\) via VIP transformation, \(T_{\mathrm{VIP}}\), as in \Cref{def:vip}. Note that \(\lambda_i\) interpolates between the centered (\(\lambda_i = 1\)) and fully non-centered (\(\lambda_i = 0\)) extremes.
\begin{definition}[VIP Transformation]
\label{def:vip}
For a fixed ordering of latent variables \(z=(z_1,\dots,z_D)\), and auxiliary variables \(\tilde{z} = (\tilde{z}_1, \dots, \tilde{z}_D)\), let \(\lambda_i \in [0,1]\) be the partial non-centering parameter for each coordinate \(i\), and \(f_i(\pi(z_i))\) and \(g_i(\pi(z_i))\) be continuous functions that return the mean and standard deviation of \(z_i\) (respectively) based on the parent nodes \(\pi(z_i)\). Then, the VIP transformation \(z=T_{\mathrm{VIP}}(\tilde{z})\) is defined coordinate-wise by
\begin{equation}
z_i = f_i(\pi(z_i)) + g_i(\pi(z_i))^{\,1-\lambda_i}\Bigl( \tilde{z}_i - \lambda_i\, f_i(\pi(z_i)) \Bigr),
\end{equation}
for \(i=1,\dots,D\).
\end{definition}
Substituting \(T_{\mathrm{VIP}}\) into the joint distribution (\Cref{eq:model_joint}) yields a partially non-centered representation
\begin{equation}
p_{\mathrm{VIP}}(\tilde{z}, x)
=
\prod_{i=1} \mathcal{N}\Bigl(\tilde{z}_i \vert \lambda_i f_i\bigl(\pi(z_i)\bigr), g_i\bigl(\pi(z_i)\bigr)^{\lambda_i}\Bigr) \times
\prod_{j=1} p\bigl(x_j \vert \pi(x_j)\bigr).
\end{equation}
VIP then learns a variational distribution \(q_w(\tilde{z})\) by minimizing \(\mathrm{KL}\left(q_w(\tilde{z}) \| p_{\mathrm{VIP}}(\tilde{z} \mid x)\right)\), optimizing both the variational parameters \(w\) and the partiality parameters \(\lambda\). 
\subsection{Equivalence of VIP in Model vs Variational Spaces}
VIP applies \(T_{\mathrm{VIP}}\), in the \emph{model space}. However, there is a straightforward analog in the \emph{variational space}. Suppose \(q_w(\tilde{z})\) is any distribution over the non-centered latent variables \(\tilde{z}\). We define a new distribution \(q_{w,\mathrm{VIP}}(z)\) by letting
\begin{equation}
z=T_{\mathrm{VIP}}(\tilde{z}, x) \quad \text { with } \quad \tilde{z} \sim q_w(\tilde{z}) .
\end{equation}
In other words, $q_{w, \mathrm{VIP}}$ is the distribution induced on $z$ by applying the VIP transformation $T_{\mathrm{VIP}}$ to samples from $q_w$. Concretely, its density follows from the change-of-variables formula
\begin{equation}
q_{w,\mathrm{VIP}}(z)
= q_w\left(T_{\mathrm{VIP}}^{-1}\left(z,x\right)\right)
  \;\left|\det\nabla_{\tilde{z}}T_{\mathrm{VIP}}\left(\tilde{z},x\right)\right|^{-1}.%_{\tilde{z}=T_{\mathrm{VIP}}^{-1}(z,x)}
\end{equation}
Since the KL divergence is invariant under such invertible transformations \citep{cover1991elements}, we have that
\begin{equation}
\operatorname{KL}\left(q_{w, \mathrm{VIP}}(z) \| p(z \vert x)\right)=\operatorname{KL}\left(q_w(\tilde{z}) \| p_{\mathrm{VIP}}(\tilde{z} \vert x)\right).
\end{equation}
Thus, optimizing $\min _{w, \lambda} \mathrm{KL}\left(q_{w, \mathrm{VIP}}(z) \| p(z \vert x)\right)$ with the posterior fixed is equivalent to reparameterizing the posterior as in \ref{subsec:partial_ncp}. In this paper, we choose a Gaussian distribution for $q_w(\tilde{z})$ following the original framework \citep{gorinova2020automatic}. In practice, we can choose any base family $q_w(\tilde{z})$ and still reap the benefits of partial non-centering.  For more details please refer to \Cref{alg:sample_q_w_vip} in the \Cref{appendix:algorithm}.

% In practice,  the variational parameters (mean, covariance) of \(T_{\mathrm{A}}\) along with the $\lambda$ of \(T_{\mathrm{VIP}}\) values are jointly optimized to maximize the ELBO.
% \subsection{Variationally Inferred Parameters in Q}

\subsection{Forward Autoregressive Flows}

Flows offer a powerful way to extend the expressiveness of variational families, thereby reducing the KL divergence between the variational approximation and the true posterior \citep{rezende2015variational,kingma2016improved,agrawal2020advances,agrawal2024disentangling}. In flows, a simple base distribution is gradually transformed into a more complex one through a sequence of invertible mappings. One such transformation is the forward autoregressive flow.

\begin{definition}
\label{def:forward_autoregressive_flow}
A \emph{forward autoregressive flow} (FAF) is an invertible transformation \(
z=T_{\mathrm{FAF}}(\epsilon),
\)
which maps a base random variable 
\(
\epsilon = (\epsilon_1,\dots,\epsilon_D) \sim p_{\epsilon}
\)
to the target variable 
\(
z = (z_1,\dots,z_D)
\)
via an autoregressive transformation
\begin{equation}
\label{eq:faf}
z_i = m_i(z_{1:i-1}) + s_i(z_{1:i-1})\,\epsilon_i,\quad i=1,\dots,D,
\end{equation}
where \(p_\epsilon\) is a base distribution, \(m_i\) is a shift function and \(s_i\) is a scaling function. Furthermore, an \emph{affine} forward autoregressive flow is one in which $m_i$ and $\log s_i$ are affine functions.
\end{definition}

In practice, the scaling function is often parameterized in terms of \(\log s_i\) and subsequently recovered using the exponential function, which ensures positivity; however, alternative nonlinear functions can also be used. Also, multiple layers of the transformation can be composed to build more flexible variational families.

\section{Model-Informed Flows: Bridging VIP and Autoregressive Flows}

% \emph{Can we build a flow-based model that encapsulates the benefits of VIP?}
We return to our central question: \emph{Can a flow-based distribution encapsulate the benefits of VIP?} The VIP process, when applied in the variational space, involves two key stages: first, sampling an auxiliary variable $\tilde{z}$ from a base variational distribution $q_w(\tilde{z})$, and second, transforming $\tilde{z}$ into the latent variable $z$ using the VIP transformation, $T_{\mathrm{VIP}}$.

The original VIP framework \citep{gorinova2020automatic} primarily considers $q_w(\tilde{z})$ to be a mean-field Gaussian (i.e., with a diagonal covariance matrix). Our investigation extends this by considering $q_w(\tilde{z})$ as a full-rank Gaussian.
%, where the covariance matrix is dense, allowing for richer correlations in the base distribution. This sets the stage for a more comprehensive analysis of the interplay between VIP and expressive flow architectures.

% Our core argument unfolds by first examining the components. We begin by demonstrating that the affine transformation inherent in a full-rank Gaussian can indeed be represented by an affine forward autoregressive flow (FAF). However, we will subsequently argue that this representation alone is insufficient to capture the complete VIP mechanism, necessitating an additional component within the flow architecture.

We first establish that a full-rank Gaussian is a special case of FAF. A full-rank Gaussian variational distribution $\tilde{z} \sim \mathcal{N}(\mu, LL^\top)$ is generated by an affine transformation $T_{\mathrm{A}}$, applied to a standard normal variable $\epsilon \sim \mathcal{N}(0,I)$
\begin{equation}
\tilde{z} = T_A(\epsilon) = \mu + L\,\epsilon,
\end{equation}
where $\mu \in \mathbb{R}^D$ is the mean vector and $L \in \mathbb{R}^{D\times D}$ is a lower-triangular Cholesky factor of the covariance matrix. This affine transformation, $T_A$, can be replicated by an affine FAF by defining the shift and scale functions of the FAF (recall Definition~\ref{def:forward_autoregressive_flow}) as
\begin{equation}
m_i(\tilde{z}_{1:i-1}) = \mu_i + L_{i,1:i-1}\,L_{1:i-1,1:i-1}^{-1}\bigl(\tilde{z}_{1:i-1}-\mu_{1:i-1}\bigr), \quad \log s_i(\tilde{z}_{1:i-1}) = \log L_{ii},
\end{equation} 
(see \Cref{thm:full_rank_faf} in \Cref{appendix:proof}). Thus, a full-rank Gaussian distribution can be viewed as a specific instance of an affine FAF. 

While a standard affine FAF can represent the initial transformation $T_A$ (generating $\tilde{z}$ from $\epsilon$), it cannot, by itself, fully represent the composite transformation $T_{\mathrm{VIP}} \circ T_A$ that yields $z$.
To see why, observe that the composite transformation reduces to 
\begin{equation}
z_i = f_i(\pi(z_i)) + g_i(\pi(z_i))^{1-\lambda_i}\Bigl( \mu_i + \sum_{j=1}^{i-1} L_{ij}\epsilon_j + L_{ii}\epsilon_i - \lambda_i f_i(\pi(z_i)) \Bigr).
\end{equation}
If we attempt to cast this composite transformation into the standard affine FAF form from \Cref{eq:faf} the terms would need to be
\begin{align}
m_i(z_{1:i-1}) &\stackrel{?}{=} f_i(\pi(z_i)) + g_i(\pi(z_i))^{1-\lambda_i} \Bigl( \mu_i + \sum_{j=1}^{i-1} L_{ij}\epsilon_j - \lambda_i f_i(\pi(z_i))\Bigr), \label{eq:m_i_fail} \\
\log s_i(z_{1:i-1}) &\stackrel{?}{=} \log g_i(\pi(z_i))^{1-\lambda_i} + \log L_{ii}. \label{eq:s_i_fail}
\end{align}
In principle, sufficiently powerful networks in the FAF could represent these forms. However, they could not be captured in an affine FAF due to dependence on \(\epsilon\) and prior functions. In addition, \Cref{eq:m_i_fail} contains the product \(g_i(\pi(z_i))\,f_i(\pi(z_i))\), which induces quadratic dependence on \(z\) that an affine mapping cannot represent. These nonlinearities may be difficult to capture even with an FAF that uses, e.g., multi-layer perceptrons. To remedy this, we introduce a generalized flow.
\begin{definition}
\label{def:gfaf}
A \emph{generalized forward autoregressive flow} (GFAF) is an invertible transformation \(z=T_{\mathrm{GFAF}}(\epsilon)\), which maps a base random variable \(\epsilon = (\epsilon_1,\dots,\epsilon_D) \sim p_{\epsilon}\) to the target variable \(z = (z_1,\dots,z_D)\) via an autoregressive transformation 
\begin{equation}
\label{eq:gfaf_transform}
z_i = m_i(z_{1:i-1}) + s_i(z_{1:i-1})\bigl(\epsilon_i - t_i(\epsilon_{1:i-1},z_{1:i-1})\bigr),\quad i=1,\dots,D,
\end{equation}
where \(p_\epsilon\) is a base distribution, \(m_i\) is a shift function, \(s_i\) is a scaling function, and \(t_i\) is a translation function. Furthermore, an \emph{affine} generalized forward autoregressive flow is one in which $m_i$, $\log s_i$, and \(t_i\) are affine functions.
\end{definition}

Our main theoretical result, stated in Theorem~\ref{thm:faf_arbitrary_vip}, demonstrates how the composite transformation \(T = T_{\mathrm{VIP}} \circ T_A\) (a full-rank Gaussian transformation followed by the VIP transformation) can be exactly represented by such a generalized forward autoregressive flow.

\begin{restatable}{theorem}{vipdesignarbitrary}
\label{thm:faf_arbitrary_vip}
Let \(T = T_{\mathrm{VIP}} \circ T_A\) where \(T_{\mathrm{VIP}}\) is the VIP transformation (\Cref{def:vip}) and \(T_A\) is the affine transformation from full-rank Gaussian. If \(f_i\) and \(\log g_i\) in the hierarchical Bayesian model (\Cref{eq:hierarchical_bayesian_model}) are arbitrary continuous functions, and the parent nodes \(\pi(z_i)\) are a subset of the preceding variables \(z_{1:i-1}\) (respecting the causal dependency structure of the model), then \(T\) can be represented as an affine generalized forward autoregressive flow with
\begin{align}
m_i(z_{1:i-1}) &= f_i(\pi(z_i)),\\
\log s_i(z_{1:i-1}) &= \log L_{ii} + (1-\lambda_i) \log g_i(\pi(z_i)),\\
t_i(\epsilon_{1:i-1}, z_{1:i-1}) &= \frac{1}{L_{ii}}\Bigl(\lambda_i f_i(\pi(z_i)) - \mu_i - L_{i,1:i-1}\epsilon_{1:i-1} \Bigr)
\end{align}
\end{restatable}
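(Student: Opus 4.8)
The plan is to prove the theorem by direct substitution: I will plug the proposed expressions for $m_i$, $\log s_i$, and $t_i$ into the defining equation of a GFAF (\Cref{eq:gfaf_transform}) and show that the result coincides, coordinate by coordinate, with the composite map $T = T_{\mathrm{VIP}}\circ T_A$. First I would write the composite explicitly. Applying $T_A$ gives $\tilde{z}_i = \mu_i + L_{i,1:i-1}\epsilon_{1:i-1} + L_{ii}\epsilon_i$, and substituting this into the VIP transformation of \Cref{def:vip} yields
\begin{equation*}
z_i = f_i(\pi(z_i)) + g_i(\pi(z_i))^{1-\lambda_i}\Bigl(\mu_i + L_{i,1:i-1}\epsilon_{1:i-1} + L_{ii}\epsilon_i - \lambda_i f_i(\pi(z_i))\Bigr).
\end{equation*}
This is the target expression I must reproduce.

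Next I would substitute the three proposed functions into \Cref{eq:gfaf_transform}. With $s_i = L_{ii}\,g_i(\pi(z_i))^{1-\lambda_i}$ together with the stated $m_i$ and $t_i$, expanding $m_i + s_i(\epsilon_i - t_i)$ causes the factor $1/L_{ii}$ inside $t_i$ to cancel against the $L_{ii}$ in $s_i$, while the $s_i\epsilon_i$ term supplies exactly $L_{ii}\,g_i(\pi(z_i))^{1-\lambda_i}\epsilon_i$. Collecting terms recovers the target expression verbatim; this step is routine algebra and I would not grind through it in detail.

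The two points that require genuine care are well-definedness and the affine claim. For well-definedness, I must confirm that each proposed function depends only on the arguments permitted by \Cref{def:gfaf}: $m_i$ and $s_i$ only on $z_{1:i-1}$, and $t_i$ only on $(\epsilon_{1:i-1}, z_{1:i-1})$. This is exactly where the topological-ordering hypothesis is used: since $\pi(z_i)\subseteq z_{1:i-1}$, the prior evaluations $f_i(\pi(z_i))$ and $g_i(\pi(z_i))$ are computable from $z_{1:i-1}$ alone, so $m_i$ and $s_i$ are functions of $z_{1:i-1}$, and $t_i$, which additionally uses $\epsilon_{1:i-1}$ linearly, is a function of $(\epsilon_{1:i-1},z_{1:i-1})$.

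I expect the main obstacle to be making precise the sense in which the flow is \emph{affine}, given that $f_i$ and $\log g_i$ are arbitrary continuous functions, so that $m_i = f_i(\pi(z_i))$ is plainly nonlinear in $z_{1:i-1}$. The resolution — and the content of item (iii) in the introduction — is that the prior evaluations are treated as \emph{additional inputs} to the coordinate maps. Writing $a_i := f_i(\pi(z_i))$ and $b_i := \log g_i(\pi(z_i))$, I would verify that $m_i = a_i$, that $\log s_i = \log L_{ii} + (1-\lambda_i)b_i$, and that $t_i = \tfrac{1}{L_{ii}}(\lambda_i a_i - \mu_i - L_{i,1:i-1}\epsilon_{1:i-1})$ are each affine in the augmented input $(z_{1:i-1}, \epsilon_{1:i-1}, a_i, b_i)$, which establishes the affine-GFAF property. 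Finally I would record invertibility: since $L_{ii}>0$ and $g_i(\pi(z_i))>0$ give $s_i>0$, the GFAF is invertible coordinate-wise, consistent with $T$ being a composition of the invertible maps $T_A$ and $T_{\mathrm{VIP}}$.
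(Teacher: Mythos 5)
Your proof is correct, and its computational core is the same as the paper's: write $T_A$ coordinate-wise, substitute into the VIP update to obtain $z_i = f_i(\pi(z_i)) + g_i(\pi(z_i))^{1-\lambda_i}\bigl(\mu_i + L_{i,1:i-1}\epsilon_{1:i-1} + L_{ii}\epsilon_i - \lambda_i f_i(\pi(z_i))\bigr)$, and match this against $m_i + s_i(\epsilon_i - t_i)$. You run the match as a verification (plug in the claimed $m_i$, $s_i$, $t_i$ and watch the $1/L_{ii}$ cancel against $L_{ii}$), while the paper runs it as a derivation (factor out $g_i(\pi(z_i))^{1-\lambda_i}L_{ii}$ and read off the three functions); this is the same algebra in opposite directions, and your well-definedness check via $\pi(z_i)\subseteq z_{1:i-1}$ also mirrors the paper's. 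The genuine difference is your handling of the word \emph{affine}: the paper's proof never engages with it at all --- its final sentence claims only a representation as a \emph{generalized} forward autoregressive flow, even though the theorem asserts an affine GFAF, and, as you correctly observe, $m_i(z_{1:i-1}) = f_i(\pi(z_i))$ is not affine in $z_{1:i-1}$ when $f_i$ is an arbitrary continuous function. Your resolution --- treating $a_i = f_i(\pi(z_i))$ and $b_i = \log g_i(\pi(z_i))$ as additional inputs and checking that $m_i$, $\log s_i$, $t_i$ are affine in the augmented input $(z_{1:i-1}, \epsilon_{1:i-1}, a_i, b_i)$ --- is exactly the convention the paper adopts later for MIF (\Cref{def:mif}, where the conditioners act on $u_i = [z_{1:i-1}, f_i(\pi(z_i)), g_i(\pi(z_i))]$), and it is what makes the theorem's ``affine'' qualifier true; without it the conclusion would have to be weakened to a plain GFAF, with the affine case deferred to \Cref{thm:vip_design_affine}. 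So your proposal not only reproduces the paper's argument but closes a small imprecision in it; your closing invertibility remark ($s_i > 0$ since $L_{ii} > 0$ and $g_i > 0$) is likewise a harmless addition the paper omits.
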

\Cref{thm:faf_arbitrary_vip} therefore demonstrates that GFAF, by leveraging its novel translation term \(t_i\) (which depends on past noise \(\epsilon_{1:i-1}\)) and by incorporating the model's prior functions, can replicate the full-rank VIP mechanism.

\subsection{Model-Informed Flows}
Building on the previous ideas, we introduce a simple generalization of FAFs designed to capture the VIP mechanism even if the conditioning networks are affine. 
\begin{definition}
\label{def:mif}
A \emph{Model-Informed Flow} (MIF) is an invertible transformation \(z=T_{\mathrm{MIF}}(\epsilon)\), which maps a base random variable \(\epsilon = (\epsilon_1,\dots,\epsilon_D) \sim p_{\epsilon}\) to the target variable \(z = (z_1,\dots,z_D)\) via an autoregressive transformation 
\begin{align}
z_i &= m_i(u_i) + s_i(u_i) (\epsilon_i - t_i(\epsilon_{1:i-1}, u_i)\bigr),\quad i=1,\dots,D, \\
u_i &= [z_{1:i-1},\  f_i(\pi(z_i)),\  g_i(\pi(z_i))]
\end{align}
where \(p_\epsilon\) is a base distribution, \(m_i\) is a shift function, \(s_i\) is a scaling function, and \(t_i\) is a translation term. Furthermore, an \emph{Affine} Model-Informed Flow is one in which $m_i$, $\log s_i$, and \(t_i\) are affine functions.
\end{definition}
\begin{algorithm}[t]
\caption{Model-Informed Flow (MIF)}
\label{alg:mif}
\begin{algorithmic}[1]
\Require \(\epsilon=(\epsilon_1, \cdots, \epsilon_D)\) \textcolor{blue}{, prior functions \(f_1, \cdots, f_D\) and \( g_1, \cdots, g_D\) (or their logs for scale)}
\For{\(i = 1\) to \(D\)} \Comment{\textcolor{blue}{Process in a pre-defined topological order of latent variables}}
    \State \(u_i \gets [z_{1:i-1} \textcolor{blue}{, f_i(\pi(z_i)), \log g_i(\pi(z_i))}]\) \Comment{\textcolor{blue}{Construct input incorporating prior information}}
    \State \(m_i \gets \text{NN}_m(u_i)\) \Comment{Neural network for the shift function}
    \State \(\log s_i \gets \text{NN}_s(u_i)\) \Comment{Neural network for the log-scale function}
    \State \textcolor{blue}{\(t_i \gets \text{NN}_t([u_i, \epsilon_{1:i-1}])\)} \Comment{\textcolor{blue}{Neural network for the translation term with \(\epsilon_{1:i-1}\) inputs}}
    \State \(z_i \gets m_i + \exp(\log s_i) \cdot \Bigl(\epsilon_i \textcolor{blue}{- t_i}\Bigr)\) \Comment{Generalized autoregressive transformation}
\EndFor
\State \Return \(z = (z_1, \ldots, z_D)\)
\end{algorithmic}
\end{algorithm}
\Cref{alg:mif} presents the pseudocode for implementing MIF, highlighting in \textcolor{blue}{blue} the components that extend a standard forward autoregressive flow; omitting these reduces MIF to such a standard FAF. The key architectural differences are, first, MIF employs conditioning networks not only for the shift \(m_i\) and log-scale \(\log s_i\) but also for the translation term \(t_i\), with the latter explicitly taking previous noise variables \(\epsilon_{1:i-1}\) as inputs, as motivated by \Cref{thm:faf_arbitrary_vip}. Second, MIF is designed to incorporate the model's prior functions \(f_i\) and \(\log g_i\) as additional inputs to these conditioning networks, providing them with valuable, explicit structural guidance from the target model. Third, the generation process in MIF (the loop over \(i\)) must adhere to a pre-defined topological order of the latent variables consistent with the hierarchical Bayesian model.

Before moving on to experiments, we discuss two practical issues.

\subsection{Incorporating Prior Information through \(f_i\) and \(\log g_i\)}

Our preliminary experiments revealed that omitting explicit prior function inputs ($f_i, \log g_i$) from MIF often minimally impacted performance. This occurs when these priors are affine, as the target generalized forward autoregressive flow (GFAF) representing full-rank VIP (per \Cref{thm:faf_arbitrary_vip}) then simplifies to an \emph{affine} GFAF. An MIF, even without these explicit prior inputs, can learn to reproduce this affine GFAF if its own conditioning functions for $m_i, \log s_i,$ and $t_i$ possess at least affine capacity. This is shown in the following corollary.

%This simplification under affine priors is formalized in the following corollary to \Cref{thm:faf_arbitrary_vip}.

\begin{restatable}{corollary}{vipdesignaffine}
\label{thm:vip_design_affine} 
Let \(T = T_{\mathrm{VIP}} \circ T_A\) where \(T_{\mathrm{VIP}}\) is the VIP transformation (see \Cref{def:vip}) and \(T_A\) is the affine transformation from full-rank Gaussian. If \(f_i\) and \(\log g_i\) in the hierarchical Bayesian model (see \Cref{eq:hierarchical_bayesian_model}) are affine functions, then \(T\) can be represented as an affine generalized forward autoregressive flow.
\end{restatable}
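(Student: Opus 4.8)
The plan is to obtain the corollary as a direct specialization of \Cref{thm:faf_arbitrary_vip}, which already does all the structural work. That theorem supplies, for arbitrary continuous $f_i$ and $\log g_i$, the explicit conditioning functions $m_i$, $\log s_i$, and $t_i$ that realize $T = T_{\mathrm{VIP}} \circ T_A$ as a generalized forward autoregressive flow. Since the definition of an \emph{affine} GFAF (\Cref{def:gfaf}) only asks that these three functions be affine in their arguments, the entire remaining task is to check that, under the extra hypothesis that $f_i$ and $\log g_i$ are affine, each of the supplied functions is affine in $(z_{1:i-1}, \epsilon_{1:i-1})$. No new flow needs to be constructed.

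First I would record the closure properties of affine maps that the argument uses: affine functions are preserved under (i) composition with a coordinate-selection (projection) map, (ii) multiplication by a fixed scalar, and (iii) addition of constants and of linear terms. These are exactly the operations appearing in the formulas from \Cref{thm:faf_arbitrary_vip}. The one point worth stating explicitly is that $f_i$ and $\log g_i$ are written as functions of the parent set $\pi(z_i)$ rather than of the full history $z_{1:i-1}$; because the theorem assumes $\pi(z_i) \subseteq z_{1:i-1}$, selecting the parent coordinates is itself a linear map, so an affine $f_i$ (resp. $\log g_i$) on $\pi(z_i)$ extends to an affine function of the full conditioning vector $z_{1:i-1}$ by padding the non-parent coordinates with zero coefficients.

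With that in hand I would verify each function in turn. The shift $m_i(z_{1:i-1}) = f_i(\pi(z_i))$ is affine by the parent-restriction remark. The log-scale $\log s_i(z_{1:i-1}) = \log L_{ii} + (1-\lambda_i)\log g_i(\pi(z_i))$ is the constant $\log L_{ii}$ plus a fixed scalar multiple of the affine map $\log g_i$, hence affine. The translation $t_i(\epsilon_{1:i-1}, z_{1:i-1}) = L_{ii}^{-1}\bigl(\lambda_i f_i(\pi(z_i)) - \mu_i - L_{i,1:i-1}\epsilon_{1:i-1}\bigr)$ is a fixed scalar multiple of the sum of an affine function of $z_{1:i-1}$ (namely $\lambda_i f_i$), a constant ($-\mu_i$), and a linear function of $\epsilon_{1:i-1}$ (namely $-L_{i,1:i-1}\epsilon_{1:i-1}$), and is therefore affine in $(z_{1:i-1}, \epsilon_{1:i-1})$. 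Since all three conditioning functions are affine, $T$ is an affine GFAF, which is the claim.

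I do not anticipate a substantive obstacle: the corollary is essentially a bookkeeping specialization of the theorem, and the only subtlety worth flagging is the parent-restriction step that lets an affine function of $\pi(z_i)$ be read as an affine function of the full conditioning vector. The quantities $\log L_{ii}$, $\mu_i$, $L_{ii}^{-1}$, $L_{i,1:i-1}$, and $\lambda_i$ are all fixed by the full-rank Gaussian and VIP parameters, so they contribute only constants and fixed linear coefficients and pose no threat to affineness.
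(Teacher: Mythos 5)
Your proposal is correct and takes essentially the same route as the paper: both invoke \Cref{thm:faf_arbitrary_vip} and then verify that the three supplied conditioning functions $m_i$, $\log s_i$, and $t_i$ become affine under the hypothesis, the only difference being that the paper writes $f_i$ and $\log g_i$ with explicit coefficients ($a_i + b_i^\top z_{1:i-1}$, etc.) and substitutes, while you argue via closure properties of affine maps. Your explicit handling of the parent-selection step (reading an affine function of $\pi(z_i)$ as an affine function of all of $z_{1:i-1}$ via zero-padding) is a point the paper glosses over by silently writing $f_i(z_{1:i-1})$, so it is a welcome bit of added care rather than a deviation.
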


This corollary shows that when the prior functions are affine, affine MIFs have enough capacity to learn them even if they are not provided, meaning there is no impact on the optimal KL-divergence.
%The \Cref{thm:vip_design_affine} confirms that affine model priors ($f_i, \log g_i$) mean the GFAF representing full-rank VIP is itself affine. Consequently, an MIF with at least affine internal conditioners can learn this target transformation implicitly—without explicit prior inputs—by its network parameters absorbing the underlying affine structure.
Still, in practice providing $f_i$ and $\log g_i$ is helpful in practice to accelerate convergence and reduce training complexity.
We suspect the same pattern holds more generally—if a MIF is trained with high-capacity networks, it may be able to learn to represent the prior functions if not provided, at some extra training cost.

\subsection{The Necessity of Conditioning on Previously Generated Latent Variables}\label{sec:iaf_faf}

Inverse Autoregressive Flows (IAFs) \citep{kingma2016improved} represent an important class of normalizing flows, lauded for the computational efficiency of their sampling process. Unlike FAFs, conditioning networks of IAFs depend solely on the base noise variables $\epsilon_{1:i-1}$, allowing parallel computation of all $z_i$:
\begin{equation}
\label{eq:iaf_definition_subsection}
z_i = m_i(\epsilon_{1:i-1}) + s_i(\epsilon_{1:i-1})\epsilon_i,\quad i=1,\dots,D.
\end{equation}
This structure makes it challenging to accurately represent the VIP mechanism. To capture VIP, an IAF's conditioning networks would need to replicate target forms that intrinsically depend on previously generated latent variables $z_{1:i-1}$ via the model's prior functions ($f_i(\pi(z_i))$ and $g_i(\pi(z_i))$). Specifically, they would need to satisfy
\begin{align}
m_i(\epsilon_{1:i-1}) &\stackrel{?}{=} f_i(\pi(z_i)) + g_i(\pi(z_i))^{1-\lambda_i} \Bigl( \mu_i + \sum_{j=1}^{i-1} L_{ij}\epsilon_j - \lambda_i f_i(\pi(z_i))\Bigr), \label{eq:iaf_target_m} \\
\log s_i(\epsilon_{1:i-1}) &\stackrel{?}{=} (1-\lambda_i)\log g_i(\pi(z_i)) + \log L_{ii}, \label{eq:iaf_target_s}
\end{align}
where $z_i$ (and thus $\pi(z_i)$) on the right-hand side are themselves complex functions of $\epsilon_{1:i-1}$. Asking $m_i(\epsilon_{1:i-1})$ and $s_i(\epsilon_{1:i-1})$ to represent this transformation is challenging as it essentially requires the network to reproduce the entire process that produced $z_{1:i-1}$ in previous steps.

%Forcing $m_i(\epsilon_{1:i-1})$ and $s_i(\epsilon_{1:i-1})$ to learn this implicit $z(\epsilon)$ mapping and then apply the $f_i, g_i$ logic is a significantly more demanding task than direct conditioning, theoretically achievable only with exceptionally expressive conditioners. While augmenting an IAF (as in \Cref{eq:iaf_definition_subsection}) with a noise conditioning term $t_i(\epsilon_{1:i-1})$ (making $z_i = m_i + s_i(\epsilon_i - t_i)$), similar to a component in MIF, could enhance its general flexibility, it would not resolve the core challenge for an IAF-restricted $t_i$ to represent $z_{1:i-1}$-dependent components like $f_i(\pi(z_i))$ from $\epsilon_{1:i-1}$ alone.

Particularly, an IAF with \emph{affine} conditioning networks for $m_i, \log s_i$ (and a term like $t_i$, if added) cannot capture the VIP transformation, since $z_{1:i-1}$ and $\epsilon_{1:i-1}$ have a complex non-linear relationship.
This limitation underscores the advantage of forward autoregressive structures. We show in our experiments (e.g., results related to \Cref{fig:deep_faf_iaf}) that IAFs require substantially greater model capacity to approach the effectiveness of MIF in representing VIP.

% \subsubsection{Preserving Hierarchical Ordering in the Flow}

% Finally, it is crucial that the flow respects the natural hierarchical ordering of latent variables—i.e., each \(z_i\) is conditioned on its predecessors \(z_{1:i-1}\). This ordering is not merely a technical detail; it reflects the underlying dependency structure of hierarchical Bayesian models and is embedded in the construction of the functions \(m_i\), \(\log s_i\), and \(t_i\). Disregarding this ordering can disrupt the flow’s ability to replicate the complex posterior geometry, and our empirical studies later confirm that maintaining the correct order is vital for performance.

\begin{table}[t]
\centering
\caption{Negative ELBOs ($-\mathrm{ELBO}$) for hierarchical benchmark models using mean-field Gaussian (MF), mean-field Gaussian with VIP (MF-VIP), full-rank Gaussian (FR), and full-rank Gaussian with VIP (FR-VIP). Lower values indicate tighter posterior approximations.}
\label{tab:prelim_obs}
\begin{tabular}{lcccc}
\toprule
Model             & MF      & MF-VIP & FR      & FR-VIP \\
\midrule
\texttt{8Schools}   & 34.80   & 31.89   & 33.85   & \textbf{31.86} \\
\texttt{Credit}     & 548.65  & 533.90  & 535.88  & \textbf{525.02} \\
\texttt{Funnel}     & 1.86    & 0.00    & 1.86    & \textbf{0.00}   \\
\texttt{Radon}      & 1267.99 & 1215.17 & 1220.73 & \textbf{1213.52} \\
\texttt{Movielens}  & 870.93  & 850.49  & 856.11  & \textbf{844.51} \\
\texttt{IRT}        & 823.15  & 822.58  & 817.38  & \textbf{816.64} \\
\bottomrule
\end{tabular}
\end{table}

\section{Related Work}
Early flow-based generative models—including MADE \citep{germain2015made}, Real NVP \citep{dinh2017density}, Glow \citep{kingma2018glow}, and Masked Autoregressive Flow \citep{papamakarios2017masked}—established that autoregressive masking, coupling transforms, and invertible $1{\times}1$ convolutions enable tractable likelihoods with strong high-dimensional density estimation. Normalizing flows were subsequently adapted for variational inference, first as post-hoc enrichments of diagonal Gaussians \citep{rezende2015variational} and later as architectures that scale in latent dimensionality, most notably Inverse Autoregressive Flow (IAF) \citep{kingma2016improved}. These developments demonstrated that flexible, invertible parameterizations can substantially tighten the ELBO without sacrificing computational efficiency.

For hierarchical Bayesian models, \emph{Variationally Inferred Parameters} (VIP) introduced gradient-based partial non-centering to alleviate the funnel geometry typical of deep hierarchies \citep{gorinova2020automatic}. Building on VIP’s idea of injecting model structure into the variational family, hybrids such as Automatic Structured Variational Inference (ASVI) \citep{ambrogioni2021automaticstructure}, Cascading Flows \citep{ambrogioni2021automaticvariational}, and Embedded-Model Flows (EMF) \citep{silvestri2021embedded} couple model-aware parameterizations with flow flexibility. Our \emph{Model-Informed Flow} continues this line by proving that full-rank VIP can be represented in a forward autoregressive flow augmented with translation and prior inputs, yielding a compact yet expressive variational family for hierarchical models.

\section{Experiments}

\subsection{Experimental Setup}

To validate our theoretical results, we evaluate on six different hierarchical Bayesian models: \texttt{8Schools}, \texttt{German Credit}, \texttt{Funnel}, \texttt{Radon}, \texttt{Movielens}, and \texttt{IRT}. These models exhibit varying degrees of funnel-like posterior geometries, making them ideal testbeds for examining the benefits of VIP-inspired flow designs. For our final comprehensive benchmark experiments, we also include more models such as \texttt{Seeds}, \texttt{Sonar}, and \texttt{Ionosphere}.

For all experiments, we utilize the Adam optimizer \citep{kingma2014adam} and initialize all learnable parameters from a standard Gaussian distribution with a standard deviation of 0.1.  For our Model-Informed Flow (MIF) evaluations, distinct configurations are used: the experiments in \Cref{sec:impact_components} primarily consider a single-layer \emph{affine} MIF to clearly assess its structural components. In subsequent experiments, including the network expressiveness analysis and comprehensive benchmarks, MIF's conditioning networks are implemented as multi-layer perceptrons with ReLU activation functions to explore the impact of increased representational capacity via hidden units.

We perform 100,000 optimization iterations, using 256 Monte Carlo samples to approximate the ELBO during training. After training, the final ELBO is evaluated using 100,000 fresh samples for reliable estimation. For each model configuration, we report the best ELBO achieved after exploring six learning rates (logarithmically spaced from $10^{-1}$ to $10^{-6}$). Additional details regarding the benchmark models, specific variational family configurations, training time comparison, and MLP architectures are provided in \Cref{app:experimental_details}. Our implementation is available at \url{https://github.com/joohwanko/Model-Informed-Flow}

\subsection{Preliminary Observations: Full-Rank VIP Outperforms Baselines}

Prior work on VIP focused exclusively on mean-field Gaussian approximations \citep{gorinova2020automatic}. To assess whether the advantages of VIP extend to richer variational families, we apply VIP to a full-rank Gaussian setting and compare four configurations—MF, MF-VIP, FR, and FR-VIP—across six canonical hierarchical models. As shown in Table~\ref{tab:prelim_obs}, FR-VIP achieves the lowest negative ELBO in every case, outperforming both its non-VIP counterpart and mean-field VIP, and achieving performance comparable to more complex state-of-the-art methods (cf.\ Table~\ref{tab:benchmark}). These results highlight the power of combining full-rank with VIP and motivate our design of VIP-inspired low architectures.

\subsection{Impact of Design Components}\label{sec:impact_components}
In this section, we empirically validate the contribution of MIF's distinct design elements through a targeted ablation study, with results presented in \Cref{tab:ablation}. We compare the affine MIF against variants that individually alter or remove its key theoretically-motivated components: the \(\epsilon\)-dependent translation term (\(t_i\)), by evaluating MIF (w/o \(t_i\)); the use of explicit prior information, by assessing MIF (w/o Prior) which omits \(f_i, \log g_i\) as inputs; and adherence to the hierarchical variable processing order, examined via MIF (w/o Order). Additionally, we test an IAF-like variant, MIF (\(\epsilon\)-cond), which conditions primarily on base noise \(\epsilon_{1:i-1}\) instead of previously generated latents \(z_{1:i-1}\), alongside relevant baselines like FR-VIP and a standard IAF. The consistent outcome is that deviating from the full MIF design by removing or altering these components generally degrades performance, underscoring their collective importance to its effectiveness.

\begin{table}[t]
\centering
\caption{Negative ELBOs ($-\mathrm{ELBO}$) for ablation study comparing affine Model-Informed Flow (MIF) against variations and baselines across hierarchical models. MIF includes z-conditioning, translation term, prior info, and correct order. Please refer to \Cref{app:experimental_details} for more details of variants of MIF. Lower values indicate tighter posterior approximations.}
\label{tab:ablation} 
\resizebox{\textwidth}{!}{%
\begin{tabular}{lrrrrrrr}
\toprule
Model        & FR-VIP  & MIF            & \makecell{MIF \\ (\(\epsilon\)-cond)} & \makecell{MIF \\ (w/o $t_i$)} & \makecell{MIF \\ (w/o Prior)} & \makecell{MIF \\ (w/o Order)} & IAF     \\
\midrule
\texttt{8Schools} & 31.86   & \textbf{31.74} & 32.04            & 31.86             & 31.83              & 33.83             & 32.22   \\
\texttt{Credit}   & 525.02  & \textbf{520.72}& 523.92            & 525.87            & 522.75             & 534.63            & 523.85  \\
\texttt{Funnel}   & \textbf{0.00} & 0.01         & 0.38             & 0.38              & 0.38               & 1.86              & 0.37    \\
\texttt{Radon}    & 1213.52 & \textbf{1213.11}& 1215.11           & 1214.31           & 1213.45            & 1259.13           & 1215.23 \\
\texttt{Movielens}& 844.51  & \textbf{842.91} & 844.01            & 846.66            & 843.59             & 854.10            & 844.23  \\
\texttt{IRT}      & 816.64  & \textbf{815.49}& 815.70            & 815.74            & 815.53             & 815.58            & 815.66  \\
\bottomrule
\end{tabular}%
}
\end{table}

\subsection{Impact of Network Expressiveness}
Recall from \Cref{sec:iaf_faf} that while IAFs face fundamental challenges in representing the VIP mechanism due to their conditioning solely on noise ($\epsilon$), it was posited that sufficient network expressiveness might partially overcome these limitations. We empirically investigate this by comparing our full MIF, which conditions on previously generated latents $z_{1:i-1}$, against MIF (\(\epsilon\)-cond), an IAF-like variant where the conditioning networks primarily use $\epsilon_{1:i-1}$ instead of $z_{1:i-1}$.

\Cref{fig:deep_faf_iaf} illustrates the performance of MIF versus MIF (\(\epsilon\)-cond) as the expressiveness of their conditioning networks (e.g., by increasing the number of hidden units) is varied. Across most benchmarked models, enhancing network capacity allows MIF (\(\epsilon\)-cond) to substantially improve its ability to capture complex dependencies. Consequently, the performance gap between this IAF-like variant and the standard MIF significantly narrows with increased expressiveness, demonstrating that sufficiently powerful IAF-like architectures can indeed achieve strong results, in many cases closely approaching or nearly matching the performance of MIF, which itself may maintain a slight lead or offer benefits in parameter efficiency.

\begin{figure}[t]
    \centering
    \includegraphics[width=1.0\linewidth]{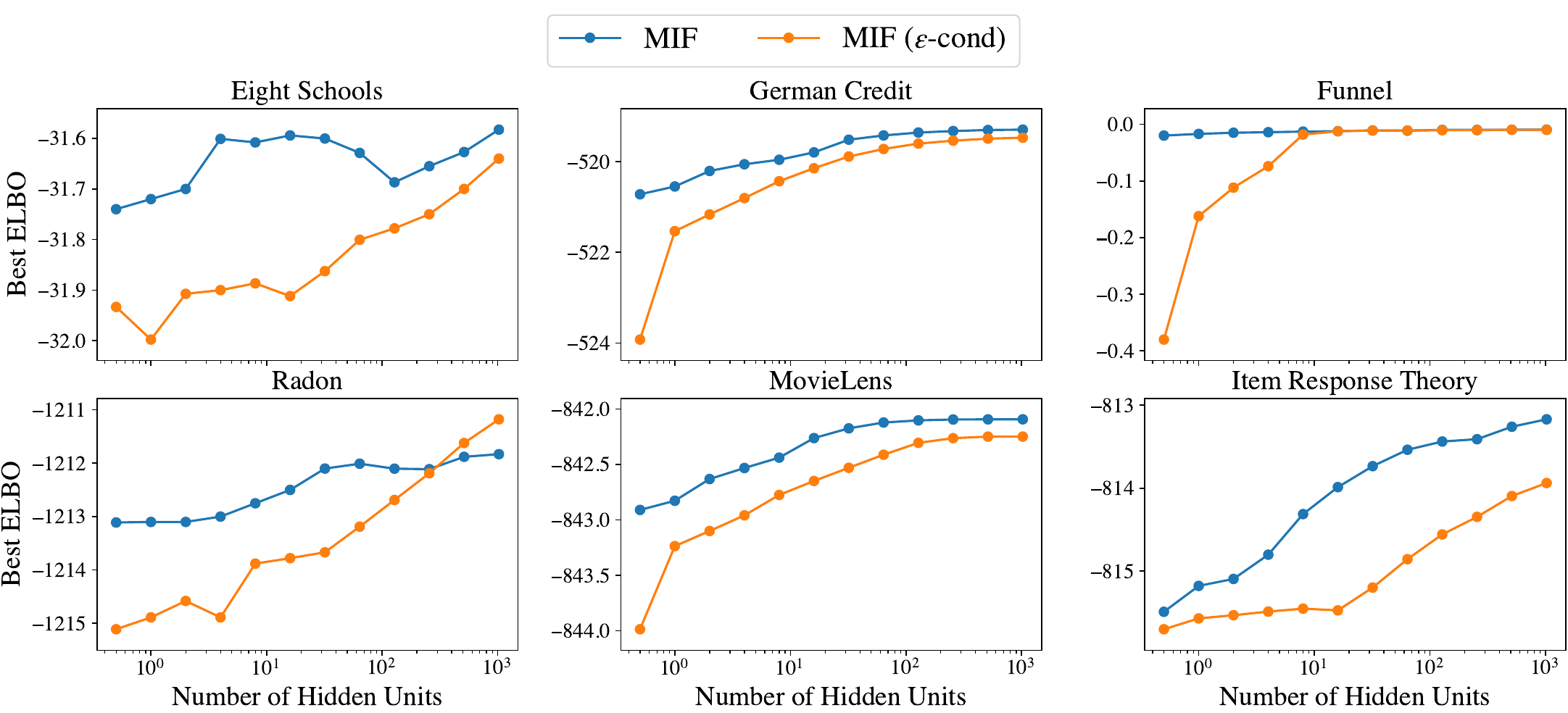}
    \caption{Best ELBO achieved by full MIF (blue) and the variant without latent‐variable conditioning (orange) as a function of MLP hidden‐unit count. Higher capacity allows the no‐latent variant to close the gap, showing that sufficiently expressive networks can implicitly learn dependencies otherwise provided by explicit latent inputs.}
    \label{fig:deep_faf_iaf}
\end{figure}

\subsection{Comprehensive Evaluation on Hierarchical and Non-hierarchical Benchmarks}

To assess its generality and competitive strength, we benchmark MIF with 1024 hidden units per conditioning network, denoted MIF\((h=2^{10})\), against several leading variational inference methods. These baselines, primarily drawn from the recent benchmark by \cite{blessing2024beyond} and detailed with their acronyms in the caption of \Cref{tab:benchmark}, provide a robust comparison. The evaluation spans the six previously discussed hierarchical models and three additional datasets (\texttt{Seeds}\(^\dag\), \texttt{Sonar}\(^\dag\), \texttt{Ionosphere}\(^\dag\)) to test MIF's broad applicability. Results in the \Cref{tab:benchmark} show that MIF\((h=2^{10})\) achieves the best or equal-best performance on most of the benchmarks. These strong results, obtained using just a single layer of our expressive MIF transformation, demonstrate that its theory-driven design effectively delivers state-of-the-art performance with notable architectural simplicity.

\begin{table}[t]
\centering
\caption{Negative ELBOs ($-\mathrm{ELBO}$) for various VI methods on hierarchical and non-hierarchical benchmarks. Compared methods include: Gaussian Mean-Field (MF) \citep{bishop2006pattern}, Gaussian Mixture Model VI (GMMVI) \citep{arenz2022unified}, Sequential Monte Carlo (SMC) \citep{del2006sequential}, Annealed Flow Transport (AFT) \citep{arbel2021annealed}, Flow Annealed IS Bootstrap (FAB) \citep{midgley2022flow}, Continual Repeated AFT (CRAFT) \citep{matthews2022continual}, and Uncorrected Hamiltonian Annealing (UHA) \citep{thin2021monte}. Lower values indicate tighter posterior approximations; \(^\dag\) denotes results taken from \cite{blessing2024beyond}.}
\label{tab:benchmark}
\resizebox{\textwidth}{!}{%
\begin{tabular}{lcccccccc}
\toprule
Model                  & MF     & GMMVI  & SMC    & AFT    & FAB     & CRAFT  & UHA    & MIF\((h=2^{10})\)      \\
\midrule
\texttt{8Schools}      & 34.80  & \textbf{31.70} & 31.80  & 31.81  & \textbf{31.79}  & 31.90  & 33.09  & \textbf{31.78} \\
\texttt{German Credit} & 548.65 & 519.35         & 531.42 & 527.60 & 519.39          & 524.59 & 536.30 & \textbf{519.29} \\
\texttt{Funnel}        & 1.86   & \textbf{0.01}  & 0.29   & 0.30   & \textbf{0.01}   & 0.02   & 0.38   & \textbf{0.00}   \\
\texttt{Radon}         & 1267.99& 1213.67        & 1221.63& 1236.42& 1215.92         & 1229.77& 1260.38& \textbf{1211.83}\\
\texttt{Movielens}     & 870.93 & 848.00         & 849.68 & 853.84 & 845.32          & 850.20 & 861.22 & \textbf{842.09} \\
\texttt{IRT}           & 823.15 & 815.05         & 820.46 & 819.56 & 816.44          & 818.66 & 821.84 & \textbf{813.17} \\
\texttt{Seeds}\(^\dag\)       & 76.73  & 73.41          & 74.69  & 74.26  & 73.41           & 73.79  & N/A    & \textbf{72.00}  \\
\texttt{Sonar}\(^\dag\)       & 137.67 & 108.72         & 111.35 & 110.67 & \textbf{108.59} & 115.61 & N/A    & 108.85          \\
\texttt{Ionosphere}\(^\dag\)  & 123.41 & 111.83         & 114.75 & 113.27 & \textbf{111.67} & 112.38 & N/A    & 111.85          \\
\bottomrule
\end{tabular}%
}
\end{table}

\section{Discussion and Limitation}
% Flow-based models
% \begin{itemize}
%     \item MADE: Masked Autoencoder for Distribution Estimation (2015) \cite{germain2015made}
%     \item Density estimation using Real NVP (2017) \cite{dinh2017density}
%     \item Glow: Generative Flow with Invertible 1x1 Convolutions (2018) \cite{kingma2018glow}
%     \item Masked Autoregressive Flow for Density Estimation (2018) \cite{papamakarios2017masked}
%     \item Other Abhinav's works?
% \end{itemize}
% Works after VIP by the authors 
% \begin{itemize}
%     \item Automatic structured variational inference (2021) \cite{ambrogioni2021automaticstructure}
%     \item Automatic Variational Inference with Cascading Flows (2021) \cite{ambrogioni2021automaticvariational}
%     \item EMBEDDED-MODEL FLOWS: COMBINING THE INDUC- TIVE BIASES OF MODEL-FREE DEEP LEARNING AND EXPLICIT PROBABILISTIC MODELING (2022) \cite{silvestri2021embedded}
% \end{itemize}

This paper introduces the Model-Informed Flow (MIF), a new architecture for variational inference designed for complex hierarchical Bayesian models. MIF originates from a connection between VIP and FAFs. A significant finding is that even a single-layer affine MIF can achieve surprisingly competitive performance against other methods. A limitation of MIF is its reliance on the FAF structure. Although FAFs may not inherently demand more floating-point operations (FLOPs) per sample than Inverse Autoregressive Flows (IAFs), with variational inference, IAFs are better suited for modern parallel computing environments. The VIP mechanism appears to be essentially sequential in nature. Thus, our research suggests a consideration for flows and VI: to capture difficult posterior geometry may require either sequential processing as in MIF or high-capacity conditioning networks.
\section{Acknowledgement}
This material is based upon work supported in part by the National Science Foundation under Grant No. 2045900.
\clearpage
\bibliographystyle{abbrv}
\bibliography{references}

\clearpage
\appendix
% \begin{center}
%   \LARGE\bfseries Supplementary Materials
% \end{center}
\vspace{1em}

\section{Acronyms and Notations}
This section provides a summary of acronyms and mathematical notations frequently used throughout the paper to aid the reader.

\begin{table}[htbp]
\centering
\caption{List of Acronyms}
\label{tab:acronyms}
\begin{tabular}{@{}ll@{}}
\toprule
\textbf{Acronym} & \textbf{Definition} \\
\midrule
CP & Centered Parameterization \\
ELBO & Evidence Lower Bound \\
FAF & Forward Autoregressive Flow \\
FR & Full-Rank (Gaussian) \\
GFAF & Generalized Forward Autoregressive Flow\\
IAF & Inverse Autoregressive Flow \\
KL & Kullback-Leibler (divergence) \\
MF & Mean-Field (Gaussian) \\
MIF & Model-Informed Flow \\
MLP & Multi-Layer Perceptron \\
NCP & Non-Centered Parameterization \\
VI & Variational Inference \\
VIP & Variationally Inferred Parameters \\
\bottomrule
\end{tabular}
\end{table}

\begin{table}[htbp]
\centering
\caption{List of Notations}
\label{tab:notations}
\begin{tabular}{@{}ll@{}}
\toprule
\textbf{Symbol} & \textbf{Definition} \\
\midrule
% General Statistics and VI
$x$ & Observed data \\
$z$ & Latent variables, $z = (z_1, \dots, z_D)$ \\
$D$ & Dimensionality of the latent space \\
$\pi(z_i)$ & Set of parent variables for $z_i$ in a hierarchical model \\
$f_i(\cdot)$ & Prior mean function for $z_i$, i.e., $z_i \sim \mathcal{N}(f_i(\pi(z_i)), \cdot)$ \\
$g_i(\cdot)$ & Prior standard deviation function for $z_i$, i.e., $z_i \sim \mathcal{N}(\cdot, g_i(\pi(z_i)))$ \\
$\epsilon$ & Base noise variable, typically $\epsilon \sim \mathcal{N}(0,I)$ \\
$T_{\mathrm{NCP}}(\cdot)$ & Non-Centered Parameterization transformation \\
$\tilde{z}$ & Auxiliary base variable in VIP/NCP, $z = T(\tilde{z})$ \\
$\lambda_i$ & Partial non-centering parameter for $z_i$ in VIP, $\lambda_i \in [0,1]$ \\
$T_{\mathrm{VIP}}(\cdot)$ & VIP transformation \\
% Flows & MIF
$T_A(\cdot)$ & Affine transformation, e.g., $\mu + L\epsilon$ for a Gaussian \\
$m_i(\cdot)$ & Shift function for $z_i$ in an autoregressive flow \\
$s_i(\cdot)$ & Scale function for $z_i$ in an autoregressive flow \\
$t_i(\cdot)$ & Translation function for $\epsilon_i$ in GFAF/MIF \\
$u_i$ & Input to conditioning networks in MIF: $[z_{1:i-1}, f_i(\pi(z_i)), g_i(\pi(z_i))]$ \\
\bottomrule
\end{tabular}
\end{table}
% \clearpage

\section{Algorithm}\label{appendix:algorithm}

Algorithm~\ref{alg:sample_q_w_vip} details the procedure for generating a sample $z$ from the variational distribution $q_{w,\mathrm{VIP}}(z)$ and then evaluating its corresponding log-density. 

\begin{algorithm}[htbp]
\caption{Sampling from and Evaluating $q_{w,\mathrm{VIP}}(z)$}
\label{alg:sample_q_w_vip}
\begin{algorithmic}[1]
\Require Parameters $w$ of the base distribution $q_{w}(\tilde{z})$; VIP parameters $\lambda_1, \dots, \lambda_D$; model prior functions $f_i(\cdot)$, $g_i(\cdot)$, and parent dependencies $\pi(\cdot)$; dimensionality $D$; a topological ordering for $z_1, \dots, z_D$.
\State Sample $\tilde{z} \sim q_{w}(\tilde{z})$
\State $\log J \leftarrow 0$
\For{$i = 1, \dots, D$}
    \State $\displaystyle z_i \leftarrow f_i\bigl(\pi(z_i)\bigr) + g_i\bigl(\pi(z_i)\bigr)^{1-\lambda_i}\bigl(\tilde{z}_i - \lambda_i f_i\bigl(\pi(z_i)\bigr)\bigr)$
    \State $\displaystyle \log J \leftarrow \log J + (1 - \lambda_i)\log g_i\bigl(\pi(z_i)\bigr)$
\EndFor
\State $\displaystyle \log q_{w,\mathrm{VIP}}(z) \leftarrow \log q_{w}(\tilde{z}) - \log J$
\State \textbf{return} $\mathbf{z}, \log q_{w,\mathrm{VIP}}(z)$
\end{algorithmic}
\end{algorithm}

% \newpage
\section{Proof}
\label{appendix:proof}
\begin{restatable}{lemma}{FullRankFAF}
\label{thm:full_rank_faf}
The affine transformation \(T_{\mathrm{A}}\) from full-rank VI can be represented as an affine forward autoregressive flow with
\begin{align*}
m_i(z_{1:i-1}) &= \mu_i + L_{i,1:i-1}\,L_{1:i-1,1:i-1}^{-1}\Bigl(z_{1:i-1} - \mu_{1:i-1}\Bigr),\\[1ex]
\log s_i(z_{1:i-1}) &= \log L_{ii}.
\end{align*}
\end{restatable}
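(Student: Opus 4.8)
The plan is to start from the explicit affine map $z = \mu + L\epsilon$ with $\epsilon \sim \mathcal{N}(0,I)$ and $L$ lower triangular with strictly positive diagonal, and to rewrite it coordinate-by-coordinate into the autoregressive form of \Cref{def:forward_autoregressive_flow}. First I would expand the $i$-th coordinate, using the lower-triangularity of $L$ to separate the term multiplying $\epsilon_i$ from the terms involving the past noise $\epsilon_{1:i-1}$:
\[
z_i = \mu_i + \sum_{j=1}^{i-1} L_{ij}\epsilon_j + L_{ii}\epsilon_i.
\]
This already identifies the scale, since $L_{ii}$ is precisely the coefficient of $\epsilon_i$; hence $s_i(z_{1:i-1}) = L_{ii}$ and $\log s_i(z_{1:i-1}) = \log L_{ii}$. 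It then remains only to re-express the sum over past noise as a function of the previously generated outputs $z_{1:i-1}$.

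The key step is a change of variables on the leading block. I would observe that the first $i-1$ coordinates satisfy $z_{1:i-1} = \mu_{1:i-1} + L_{1:i-1,1:i-1}\,\epsilon_{1:i-1}$, where $L_{1:i-1,1:i-1}$ is the leading principal submatrix of $L$ and is therefore itself lower triangular. Because $L$ is a Cholesky factor with positive diagonal entries, this submatrix is invertible, so I can solve $\epsilon_{1:i-1} = L_{1:i-1,1:i-1}^{-1}\bigl(z_{1:i-1} - \mu_{1:i-1}\bigr)$. Substituting this into $\sum_{j=1}^{i-1} L_{ij}\epsilon_j = L_{i,1:i-1}\,\epsilon_{1:i-1}$ yields the claimed shift $m_i(z_{1:i-1}) = \mu_i + L_{i,1:i-1}\,L_{1:i-1,1:i-1}^{-1}\bigl(z_{1:i-1}-\mu_{1:i-1}\bigr)$.

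Finally I would verify that this is indeed an \emph{affine} FAF in the sense of \Cref{def:forward_autoregressive_flow}: $m_i$ is an affine function of $z_{1:i-1}$, and $\log s_i = \log L_{ii}$ is constant and hence affine. The resulting map is genuinely autoregressive and invertible, since $z_i$ depends only on $z_{1:i-1}$ and $\epsilon_i$ with nonzero scale $L_{ii}$, matching the triangular dependence structure required by the definition. The only subtlety, and the main point to get right, is the invertibility of the leading block $L_{1:i-1,1:i-1}$; this is exactly where the positive-diagonal property of the Cholesky factor is used, and it is what justifies rewriting the past noise in terms of the past outputs.
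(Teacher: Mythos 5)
Your proposal is correct and follows essentially the same route as the paper's own proof: coordinate-wise expansion of $z = \mu + L\epsilon$, inversion of the leading principal block $L_{1:i-1,1:i-1}$ (justified by the positive Cholesky diagonal) to express $\epsilon_{1:i-1}$ in terms of $z_{1:i-1}$, and substitution to read off $m_i$ and $s_i$. Your closing remark on affineness and invertibility matches the paper's concluding observation, so there is nothing to add.
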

\begin{proof}
We begin by expressing the affine transformation \(T_{\mathrm{A}}(\epsilon) = \mu + L\epsilon\) in a coordinate-wise form:
\begin{equation}
z_i = \mu_i + L_{ii}\epsilon_i + \sum_{j=1}^{i-1} L_{ij}\epsilon_j, \quad i=1,\ldots,D.
\end{equation}

For any forward autoregressive flow to be equivalent to this transformation, we need to derive appropriate functions \(m_i(\cdot)\) and \(s_i(\cdot)\) such that \(z_i = m_i(z_{1:i-1}) + s_i(z_{1:i-1})\epsilon_i\) for each \(i\).

First, observe that for any \(i \ge 1\), the variables \(z_{1:i-1}\) depend only on \(\epsilon_{1:i-1}\) due to the lower-triangular structure of \(L\). Since \(L_{1:i-1,1:i-1}\) is a lower-triangular submatrix with positive diagonal entries (as \(L\) is a Cholesky factor), it is invertible. Thus, we can solve for \(\epsilon_{1:i-1}\) in terms of \(z_{1:i-1}\):
\begin{equation}
\epsilon_{1:i-1} = L_{1:i-1,1:i-1}^{-1}(z_{1:i-1} - \mu_{1:i-1}).
\end{equation}

Now, substituting this expression into our original coordinate-wise equation:
\begin{align}
z_i &= \mu_i + L_{ii}\epsilon_i + \sum_{j=1}^{i-1} L_{ij}\epsilon_j\\
&= \mu_i + L_{ii}\epsilon_i + L_{i,1:i-1}\epsilon_{1:i-1}\\
&= \mu_i + L_{ii}\epsilon_i + L_{i,1:i-1}L_{1:i-1,1:i-1}^{-1}(z_{1:i-1} - \mu_{1:i-1})
\end{align}

This naturally suggests defining:
\begin{align}
m_i(z_{1:i-1}) &= \mu_i + L_{i,1:i-1}L_{1:i-1,1:i-1}^{-1}(z_{1:i-1} - \mu_{1:i-1}),\\
s_i(z_{1:i-1}) &= L_{ii}.
\end{align}

With these definitions, the forward autoregressive flow transformation becomes:
\begin{equation}
z_i = m_i(z_{1:i-1}) + s_i(z_{1:i-1})\epsilon_i,
\end{equation}
which is identical to our coordinate-wise expansion of the affine transformation \(T_{\mathrm{A}}\). Since this holds for all \(i = 1,\ldots,D\), we have shown that the full-rank VI affine transformation can be exactly represented as a forward autoregressive flow with the specified shift and scale functions.

Note that \(m_i(z_{1:i-1})\) is an affine function of \(z_{1:i-1}\), and \(s_i(z_{1:i-1})\) is a constant function, confirming that this is indeed an affine forward autoregressive flow.
\end{proof}

\newpage
\vipdesignarbitrary*

\begin{proof}
The affine transformation \(T_A\) is defined as \(T_A(\epsilon) = \mu + L\epsilon\), where \(\mu\) is the mean vector and \(L\) is a lower-triangular matrix with positive diagonal elements. In coordinate form, this gives us
\begin{equation}
\tilde{z}_i = \mu_i + L_{ii}\epsilon_i + \sum_{j=1}^{i-1} L_{ij}\epsilon_j,
\end{equation}
where \(\tilde{z}_i\) denotes the intermediate result after applying \(T_A\) but before applying \(T_\lambda\).

Now, the VIP transformation \(T_\lambda\) is defined in \Cref{def:vip} as
\begin{equation}
z_i = f_i(\pi(z_i)) + g_i(\pi(z_i))^{1-\lambda_i}\bigl(\tilde{z}_i - \lambda_i f_i(\pi(z_i))\bigr),
\end{equation}
where \(\tilde{z}_i\)  is the output of \(T_A\). Substituting, we get
\begin{align}
z_i &= f_i(\pi(z_i)) + g_i(\pi(z_i))^{1-\lambda_i}\bigl(\tilde{z}_i - \lambda_i f_i(\pi(z_i))\bigr)\\
&= f_i(\pi(z_i)) + g_i(\pi(z_i))^{1-\lambda_i}\Bigl(\mu_i + L_{ii}\epsilon_i + \sum_{j=1}^{i-1} L_{ij}\epsilon_j - \lambda_i f_i(\pi(z_i))\Bigr)
\end{align}

From our assumption that \(\pi(z_i) \subset z_{1:i-1}\), we know that \(f_i(\pi(z_i))\) and \(g_i(\pi(z_i))\) are functions of only the preceding variables \(z_{1:i-1}\).

To express this in the form of a generalized forward autoregressive flow, we need functions \(m_i\), \(s_i\), and \(t_i\) such that:
\begin{equation}
z_i = m_i(z_{1:i-1}) + s_i(z_{1:i-1})\bigl(\epsilon_i - t_i(\epsilon_{1:i-1}, z_{1:i-1})\bigr)
\end{equation}

Let's rearrange our expression for \(z_i\) to match this form:
\begin{align}
z_i &= f_i(\pi(z_i)) + g_i(\pi(z_i))^{1-\lambda_i}\Bigl(\mu_i + L_{ii}\epsilon_i + \sum_{j=1}^{i-1} L_{ij}\epsilon_j - \lambda_i f_i(\pi(z_i))\Bigr)\\
&= f_i(\pi(z_i)) + g_i(\pi(z_i))^{1-\lambda_i}L_{ii}\Bigl(\epsilon_i - \frac{1}{L_{ii}}\Bigl(-\mu_i - \sum_{j=1}^{i-1} L_{ij}\epsilon_j + \lambda_i f_i(\pi(z_i))\Bigr)\Bigr)
\end{align}

By identifying terms, we can define:
\begin{align}
m_i(z_{1:i-1}) &= f_i(\pi(z_i)),\\
\log s_i(z_{1:i-1}) &= \log L_{ii} + (1-\lambda_i) \log g_i(\pi(z_i)),\\
t_i(\epsilon_{1:i-1}, z_{1:i-1}) &= \frac{1}{L_{ii}}\Bigl(\lambda_i f_i(\pi(z_i)) - \mu_i - L_{i,1:i-1}\epsilon_{1:i-1} \Bigr)
\end{align}
Thus, we have shown that the composite transformation \(T = T_\lambda \circ T_A\) can be represented as a generalized forward autoregressive flow with the specified functions \(m_i\), \(s_i\), and \(t_i\), which completes the proof.
\end{proof}

\newpage
\vipdesignaffine*

\begin{proof}
This result follows directly from Theorem \ref{thm:faf_arbitrary_vip}, which established that the composite transformation \(T = T_{\mathrm{VIP}} \circ T_A\) can be represented as a generalized forward autoregressive flow with specific functions \(m_i\), \(s_i\), and \(t_i\). We now show that when \(f_i\) and \(\log g_i\) are affine functions, all components of this flow become affine, reducing it to a standard affine forward autoregressive flow.

Since \(f_i\) and \(\log g_i\) are affine functions, we can express them as
\begin{align}
f_i(z_{1:i-1}) &= a_i + b_i^\top z_{1:i-1},\\
\log g_i(z_{1:i-1}) &= c_i + d_i^\top z_{1:i-1},
\end{align}
where \(a_i\), \(c_i\) are scalar constants and \(b_i\), \(d_i\) are vectors of appropriate dimensions.

From Theorem \ref{thm:faf_arbitrary_vip}, we have
\begin{align}
m_i(z_{1:i-1}) &= f_i(z_{1:i-1}) = a_i + b_i^\top z_{1:i-1},\\
\log s_i(z_{1:i-1}) &= \log L_{ii} +(1-\lambda_i) \log g_i(z_{1:i-1}) \\
&= \log L_{ii} + (1-\lambda_i)(c_i + d_i^\top z_{1:i-1}),\\
% t_i(\epsilon_{1:i-1}, z_{1:i-1}) &= \frac{1}{L_{ii}}\Bigl(\mu_i + L_{i,1:i-1}\,\epsilon_{1:i-1} - \lambda_i f_i(z_{1:i-1})\Bigr)\\
% &= \frac{1}{L_{ii}}\Bigl(\mu_i + L_{i,1:i-1}\,\epsilon_{1:i-1} - \lambda_i(a_i + b_i^\top z_{1:i-1})\Bigr)\\
% &= \frac{1}{L_{ii}}(\mu_i - \lambda_i a_i) + \frac{1}{L_{ii}}L_{i,1:i-1}\,\epsilon_{1:i-1} - \frac{\lambda_i}{L_{ii}}b_i^\top z_{1:i-1}\\
t_i(\epsilon_{1:i-1}, z_{1:i-1}) &= \frac{1}{L_{ii}}\Bigl(-\mu_i - L_{i,1:i-1}\,\epsilon_{1:i-1} + \lambda_i f_i(z_{1:i-1})\Bigr)\\
&= \frac{1}{L_{ii}}\Bigl(-\mu_i - L_{i,1:i-1}\,\epsilon_{1:i-1} + \lambda_i(a_i + b_i^\top z_{1:i-1})\Bigr)\\
&= \frac{1}{L_{ii}}(-\mu_i + \lambda_i a_i) - \frac{1}{L_{ii}}L_{i,1:i-1}\,\epsilon_{1:i-1} + \frac{\lambda_i}{L_{ii}}b_i^\top z_{1:i-1}
\end{align}

Clearly, \(m_i(z_{1:i-1})\) is an affine function of \(z_{1:i-1}\), and \(\log s_i(z_{1:i-1})\) is also an affine function of \(z_{1:i-1}\). Furthermore, \(t_i(\epsilon_{1:i-1}, z_{1:i-1})\) is an affine function of both \(\epsilon_{1:i-1}\) and \(z_{1:i-1}\).

Since \(m_i\), \(\log s_i\), and \(t_i\) are all affine functions of their respective inputs, this formulation constitutes an affine generalized forward autoregressive flow.
\end{proof}

\section{Experimental Details}
\label{app:experimental_details}
% \subsection{Computing Details}
All experiments were run on a single server with an Intel Xeon Platinum 8352Y CPU (128 hardware threads at 2.20 GHz), 512 GiB of RAM. For each experiment (i.e., each training or evaluation run), we used one NVIDIA A100 (40 GiB) under CUDA 12.8.
\subsection{Models}

\paragraph{Eight Schools}
The Eight Schools model \citep{rubin1981estimation} estimates treatment effects $\theta_i$ for $N_{\text{schools}}=8$ schools, given observed effects $y_i$ and known standard errors $\sigma_i$.
\begin{align}
    \mu &\sim \mathcal{N}(0, 5) & \log\tau &\sim \mathcal{N}(0, 5) \\
    \theta_i &\sim \mathcal{N}(\mu, \exp(\log\tau)) & y_i &\sim \mathcal{N}(\theta_i, \sigma_i), \quad i=1, \dots, N_{\text{schools}}.
\end{align}
Latent variables: $z = (\mu, \log\tau, \theta_1, \dots, \theta_{N_{\text{schools}}})$.

\paragraph{German Credit}
A logistic regression model \citep{gelman2007data} with $N_{\text{features}}=K$ predictors $x_{nk}$ for $N_{\text{obs}}$ individuals, predicting binary outcomes $y_n$. It uses hierarchical priors on coefficient scales $\tau_k$.
\begin{align}
    \log \tau_0 &\sim \mathcal{N}(0, 10) \\
    \log \tau_k &\sim \mathcal{N}(\log \tau_0, 1), && k=1, \dots, K \\
    \beta_k &\sim \mathcal{N}(0, \exp(\log \tau_k)), && k=1, \dots, K \\
    \eta_n &= \sum_{k=1}^K \beta_k x_{nk}, && n=1, \dots, N_{\text{obs}} \\
    y_n &\sim \operatorname{Bernoulli}(1/(1+\exp(-\eta_n))), && n=1, \dots, N_{\text{obs}}.
\end{align}
Latent variables: $z = (\log\tau_0, \{\log\tau_k\}_{k=1}^K, \{\beta_k\}_{k=1}^K)$.

\paragraph{Funnel}
Neal's Funnel distribution \citep{neal2003slice} in $D=10$ dimensions for $\mathbf{x} = (x_1, \dots, x_D)$.
\begin{align}
    x_1 &\sim \mathcal{N}(0, 3) \\
    x_k &\sim \mathcal{N}(0, \exp(x_1/2)), \quad k=2, \dots, D.
\end{align}
All variables $x_1, \dots, x_D$ are latent. (The variance for $x_1$ is $3^2=9$; the variance for $x_k, k \ge 2$ is $\exp(x_1)$.)

\paragraph{Radon}
A hierarchical linear regression \citep{gelman2007data} modeling log-radon levels $\log r_j$ in $N_{\text{homes}}$ homes. Data includes $x_j$ (floor indicator for home $j$), $u_k$ (uranium reading for county $k$), and $c_j$ (county for home $j$). Priors are inferred from the provided code structure.
\begin{align}
    \mu_0 &\sim \mathcal{N}(0, 1) \\
    a &\sim \mathcal{N}(0, 1) \\
    b &\sim \mathcal{N}(0, 1) \\
    \log \sigma_{m_k} &\sim \mathcal{N}(0, 10), && k=1, \dots, N_{\text{counties}} \\
    \log \sigma_{y} &\sim \mathcal{N}(0, 10) \\
    m_k &\sim \mathcal{N}(\mu_0 + a \cdot u_k, \exp(\log \sigma_{m_k})), && k=1, \dots, N_{\text{counties}} \\
    \log r_j &\sim \mathcal{N}(m_{c_j} + b \cdot x_j, \exp(\log \sigma_{y})), && j=1, \dots, N_{\text{homes}}.
\end{align}
Latent variables: $z = (\mu_0, a, b, \{\log \sigma_{m_k}\}_{k=1}^{N_{\text{counties}}}, \log \sigma_{y}, \{m_k\}_{k=1}^{N_{\text{counties}}})$.

\paragraph{Movielens}
The Movielens model \citep{harper2015movielens} is a hierarchical logistic regression for predicting movie ratings. Each rating $n$ is given by a user $u_n \in \{1, \dots, M\}$ for a movie described by a binary attribute vector $\mathbf{x}_n \in \{0,1\}^D$ (with $D=18$ attributes). The rating is $y_n \in \{0,1\}$.

Each user $m$ has an attribute preference vector $Z_m = (Z_{m,1}, \dots, Z_{m,D}) \in \mathbb{R}^D$. The model is:
\begin{align}
    \mu_j &\sim \mathcal{N}(0,1), && j=1, \dots, D \\
    \lambda_j &\sim \mathcal{N}(0,1), && j=1, \dots, D \\
    Z_{m,j} &\sim \mathcal{N}(\mu_j, \exp(\lambda_j)), && m=1, \dots, M; \ j=1, \dots, D \\
    y_n &\sim \operatorname{Bernoulli}\left(\operatorname{sigmoid}\left(\mathbf{x}_n^{\top} Z_{u_n}\right)\right), && n=1, \dots, N.
\end{align}
Here, $\mu_j$ is the mean preference for attribute $j$ across users, and $\lambda_j$ is the log standard deviation of preferences for attribute $j$.
Latent variables: $z = (\{\mu_j\}_{j=1}^D, \{\lambda_j\}_{j=1}^D, \{Z_{m,j}\}_{m=1,j=1}^{M,D})$.

\paragraph{IRT}
The Item Response Theory (IRT) two-parameter logistic (2PL) model \citep{lord2008statistical} is used to model student responses to test items. Let $N_S$ be the number of students, $N_Q$ be the number of questions (items), and $N_R$ be the total number of responses. For each response $r$, $s_r$ is the student ID and $q_r$ is the question ID.
\begin{align}
    % Student ability
    \alpha_s &\sim \mathcal{N} (0, 1), && s=1, \dots, N_S \\
    % Global item parameters
    \mu_{\beta} &\sim \mathcal{N} (0, 1) \\
    \log \sigma_{\beta} &\sim \mathcal{N} (0, 1) \\
    \log \sigma_{\gamma} &\sim \mathcal{N} (0, 1) \\
    % Item-specific parameters
    \beta_q &\sim \mathcal{N} (\mu_{\beta}, \exp(\log \sigma_{\beta})), && q=1, \dots, N_Q \\
    \log \gamma_q &\sim \mathcal{N} (0, \exp(\log \sigma_{\gamma})), && q=1, \dots, N_Q \\
    % Likelihood for response r
    \text{logit}_r &= \exp(\log \gamma_{q_r}) \alpha_{s_r} + \beta_{q_r}, && r=1, \dots, N_R \\
    y_r &\sim \operatorname{Bernoulli}(\operatorname{sigmoid}(\text{logit}_r)), && r=1, \dots, N_R.
\end{align}
Here, $\alpha_s$ is the ability of student $s$, $\beta_q$ is the easiness of question $q$, and $\exp(\log \gamma_q)$ is its discrimination.
Latent variables: $z = (\{\alpha_s\}_{s=1}^{N_S}, \mu_{\beta}, \log \sigma_{\beta}, \log \sigma_{\gamma}, \{\beta_q\}_{q=1}^{N_Q}, \{\log \gamma_q\}_{q=1}^{N_Q})$.

\paragraph{Seeds}
The Seeds model \citep{crowder1978beta} is a random effects logistic regression for analyzing seed germination data. It models the number of germinated seeds $r_i$ out of $N_i$ seeds in $G=21$ experimental groups, using group-specific predictors $x_{1,i}$ and $x_{2,i}$.
\begin{align}
    % Hyper-priors
    \tau &\sim \operatorname{Gamma}(0.01, 0.01) \\
    % Fixed effects
    a_0, a_1, a_2, a_{12} &\sim \mathcal{N}(0, 10) \\
    % Random effects
    b_i &\sim \mathcal{N}\left(0, 1/\sqrt{\tau}\right), && i=1, \ldots, G \\
    % Linear predictor
    \eta_i &= a_0 + a_1 x_{1,i} + a_2 x_{2,i} + a_{12} x_{1,i}x_{2,i} + b_i, && i=1, \ldots, G \\
    % Likelihood
    r_i &\sim \operatorname{Binomial}\left(N_i, \operatorname{sigmoid}(\eta_i)\right), && i=1, \ldots, G.
\end{align}
Here, $x_{1,i}$ and $x_{2,i}$ are indicator variables for experimental conditions (e.g., seed type and root extract). $\tau$ is the precision for the random effects $b_i$.
Latent variables: $z = (\tau, a_0, a_1, a_2, a_{12}, \{b_i\}_{i=1}^G)$.

\paragraph{Sonar}
The Sonar model \citep{gorman1988analysis} is a Bayesian logistic regression. It uses $D=61$ features/coefficients $\mathbf{x}$ to classify $N=208$ sonar returns $u_i$ into two categories $y_i \in \{0,1\}$ (e.g., mines vs. rocks).
\begin{align}
    \mathbf{x} &\sim \mathcal{N}\left(\mathbf{0}, \sigma_x^2 I_D\right) \\
    y_i &\sim \operatorname{Bernoulli}\left(\operatorname{sigmoid}\left(\mathbf{x}^{\top} u_i\right)\right), \quad i=1, \ldots, N.
\end{align}
Here, $u_i \in \mathbb{R}^D$ is the feature vector for the $i$-th observation, $I_D$ is the $D \times D$ identity matrix, and $\sigma_x^2$ is a predefined prior variance for the coefficients $\mathbf{x}$. The latent variables are $z = (\mathbf{x})$.
\paragraph{Ionosphere}
The Ionosphere model \citep{sigillito1989classification} is a Bayesian logistic regression. It uses $D=35$ features/coefficients $\mathbf{x}$ to classify $N=351$ radar returns $u_i$ from the ionosphere into two categories $y_i \in \{0,1\}$.
\begin{align}
    \mathbf{x} &\sim \mathcal{N}\left(\mathbf{0}, \sigma_x^2 I_D\right) \\
    y_i &\sim \operatorname{Bernoulli}\left(\operatorname{sigmoid}\left(\mathbf{x}^{\top} u_i\right)\right), \quad i=1, \ldots, N.
\end{align}
Here, $u_i \in \mathbb{R}^D$ is the feature vector for the $i$-th observation, $I_D$ is the $D \times D$ identity matrix, and $\sigma_x^2$ is a predefined prior variance for the coefficients $\mathbf{x}$. The latent variables are $z = (\mathbf{x})$.

\subsection{Implementation Details}
The conditioning networks for the shift ($m_i$), log-scale ($\log s_i$), and translation ($t_i$, for MIF and its variants) terms in both forward autoregressive flows (FAF) and inverse autoregressive flows (IAF) are constructed as follows, depending on the experimental context.

For \textbf{affine flows}, such as those used in the ablation studies in \Cref{sec:impact_components}, each conditioning network (for $m_i, \log s_i,$ or $t_i$) consists of a single linear layer. This layer directly maps the conditioning inputs to the required output parameters.

For flows requiring greater representational capacity, specifically in experiments evaluating network expressiveness (\Cref{fig:deep_faf_iaf}) and for the comprehensive benchmarks (\Cref{tab:benchmark}), the conditioning networks are implemented using a \textbf{two-component MLP-based architecture}. Each parameter ($m_i, \log s_i,$ or $t_i$) is computed by summing the outputs of two parallel components, both of which operate on the same input conditioning variables:
\begin{enumerate}
    \item A \textbf{linear component}: This is a direct linear transformation of the inputs, identical in architecture to the conditioners used in the affine flows.
    \item An \textbf{MLP component}: This is a multi-layer perceptron (MLP) featuring a single hidden layer with ReLU activation functions, followed by a linear output layer. The size of the hidden layer (number of hidden units) is varied in experiments that assess the impact of network capacity (e.g., from $2^0$ to $2^{10}$ as illustrated in \Cref{fig:deep_faf_iaf}).
\end{enumerate}
The final value for $m_i, \log s_i,$ or $t_i$ is the sum of the outputs derived from these linear and MLP components.

For experiments involving baseline models from Blessing et al. \cite{blessing2024beyond}, we utilized the implementations provided in their publicly available code repository to ensure consistency and fair comparison. 
\subsection{Training Time Comparisons}
We report wall-clock training times (in seconds) across six benchmarks and varying hidden sizes. \Cref{tab:training-time-comparisons} compares our default \emph{Model-Informed Flow} (MIF) against a variant that conditions on the base noise (``MIF with $\epsilon$''), while \Cref{tab:training-time-ablation} ablates the number of conditioning subnetworks: ``3 networks'' uses $(m,s,t)$ and ``2 networks (no $t$)'' uses only $(m,s)$. Overall, ``MIF with $\epsilon$'' is consistently faster than the default MIF at comparable capacities, and removing the translation network ($t$) yields additional speedups with similar trends across datasets. We use $D$ to denote latent dimensionality.

\begin{table*}[t]
\centering
\caption{Training time comparisons (seconds) between the default \emph{Model-Informed Flow} (MIF) and a variant that conditions on the base noise (``MIF with $\epsilon$'').}
\label{tab:training-time-comparisons}
\begin{tabular}{l c S[table-format=4.0] S[table-format=5.2] S[table-format=5.2]}
\toprule
\textbf{Model} & \textbf{D} & \textbf{Hidden Units} & {\textbf{MIF (s)}} & {\textbf{MIF with $\epsilon$ (s)}} \\
\midrule
\multirow{5}{*}{\textbf{Eight Schools}} & \multirow{5}{*}{10}
  & 0    & 17.06  & 13.88 \\
 &  & 64   & 28.40  & 23.22 \\
 &  & 256  & 28.04  & 22.18 \\
 &  & 512  & 28.21  & 22.03 \\
 &  & 1024 & 30.30  & 22.61 \\
\midrule
\multirow{5}{*}{\textbf{Funnel}} & \multirow{5}{*}{10}
  & 0    & 18.23  & 14.70 \\
 &  & 64   & 27.38  & 23.16 \\
 &  & 256  & 27.04  & 22.42 \\
 &  & 512  & 28.57  & 22.30 \\
 &  & 1024 & 30.77  & 22.69 \\
\midrule
\multirow{5}{*}{\textbf{German Credit}} & \multirow{5}{*}{125}
  & 0    & 63.81   & 16.06 \\
 &  & 64   & 148.73  & 27.81 \\
 &  & 256  & 208.44  & 54.92 \\
 &  & 512  & 275.74  & 93.26 \\
 &  & 1024 & 417.05  & 173.51 \\
\midrule
\multirow{5}{*}{\textbf{Radon}} & \multirow{5}{*}{174}
  & 0    & 84.25   & 15.78 \\
 &  & 64   & 238.04  & 37.32 \\
 &  & 256  & 331.24  & 94.82 \\
 &  & 512  & 453.70  & 168.28 \\
 &  & 1024 & 699.73  & 338.63 \\
\midrule
\multirow{5}{*}{\textbf{IRT}} & \multirow{5}{*}{143}
  & 0    & 72.26   & 17.59 \\
 &  & 64   & 196.64  & 33.15 \\
 &  & 256  & 253.17  & 67.95 \\
 &  & 512  & 457.56  & 119.95 \\
 &  & 1024 & 516.98  & 222.65 \\
\midrule
\multirow{5}{*}{\textbf{MovieLens}} & \multirow{5}{*}{882}
  & 0    & 458.28   & 28.97 \\
 &  & 64   & 1981.14  & 384.45 \\
 &  & 256  & 4847.23  & 1387.64 \\
 &  & 512  & 8634.91  & 3156.82 \\
 &  & 1024 & 15873.45 & 6749.33 \\
\bottomrule
\end{tabular}
\end{table*}

\begin{table*}[t]
\centering
\caption{Ablation on the number of conditioning subnetworks (seconds). ``3 networks'' uses shift, scale, and translation $(m,s,t)$. ``2 networks (no $t$)'' removes the translation network and uses only $(m,s)$ under the same training setup.}
\label{tab:training-time-ablation}
\begin{tabular}{l c S[table-format=4.0] S[table-format=5.2] S[table-format=5.2]}
\toprule
\textbf{Model} & \textbf{D} & \textbf{Hidden Units} & {\textbf{3 Networks (s)}} & {\textbf{2 Networks (no $t$) (s)}} \\
\midrule
\multirow{5}{*}{\textbf{Eight Schools}} & \multirow{5}{*}{10}
  & 0    & 17.06   & 16.98 \\
 &  & 64   & 28.40   & 24.13 \\
 &  & 256  & 28.04   & 24.81 \\
 &  & 512  & 28.21   & 25.80 \\
 &  & 1024 & 30.30   & 25.76 \\
\midrule
\multirow{5}{*}{\textbf{Funnel}} & \multirow{5}{*}{10}
  & 0    & 18.23   & 18.63 \\
 &  & 64   & 27.38   & 25.89 \\
 &  & 256  & 27.04   & 25.85 \\
 &  & 512  & 28.57   & 26.27 \\
 &  & 1024 & 30.77   & 26.01 \\
\midrule
\multirow{5}{*}{\textbf{German Credit}} & \multirow{5}{*}{125}
  & 0    & 63.81    & 52.68 \\
 &  & 64   & 148.73   & 115.39 \\
 &  & 256  & 208.44   & 139.90 \\
 &  & 512  & 275.74   & 175.94 \\
 &  & 1024 & 417.05   & 258.98 \\
\midrule
\multirow{5}{*}{\textbf{Radon}} & \multirow{5}{*}{174}
  & 0    & 84.25    & 77.52 \\
 &  & 64   & 238.04   & 220.17 \\
 &  & 256  & 331.24   & 306.44 \\
 &  & 512  & 453.70   & 419.42 \\
 &  & 1024 & 699.73   & 647.75 \\
\midrule
\multirow{5}{*}{\textbf{IRT}} & \multirow{5}{*}{143}
  & 0    & 72.26    & 66.73 \\
 &  & 64   & 196.64   & 181.92 \\
 &  & 256  & 253.17   & 234.43 \\
 &  & 512  & 457.56   & 423.00 \\
 &  & 1024 & 516.98   & 478.45 \\
\midrule
\multirow{5}{*}{\textbf{MovieLens}} & \multirow{5}{*}{882}
  & 0    & 458.28    & 423.65 \\
 &  & 64   & 1981.14   & 1832.85 \\
 &  & 256  & 4847.23   & 4486.69 \\
 &  & 512  & 8634.91   & 7992.34 \\
 &  & 1024 & 15873.45  & 14687.99 \\
\bottomrule
\end{tabular}
\end{table*}

\clearpage

\end{document}